\newtheorem{lemma}{Lemma}
\DeclareMathOperator*{\argmax}{arg\,max}
\title{\Large \bf HGP-RL: Distributed Hierarchical Gaussian Processes for Wi-Fi-based Relative Localization in Multi-Robot Systems}
\author{Ehsan Latif \and Ramviyas Parasuraman
\thanks{The authors are with the School of Computing, University of Georgia, USA.  Corresponding Author Email: {\tt\small \{ramviyas\}@uga.edu}.}  
\thanks{This work is supported by the U.S. Army Research Laboratory under the DCIST Cooperative Agreement Number W911NF-17-2-0181.}
}
\begin{document}

\maketitle
\thispagestyle{empty}
\pagestyle{empty}

\begin{abstract}
Relative localization is crucial for multi-robot systems to perform cooperative tasks, especially in GPS-denied environments. Current techniques for multi-robot relative localization rely on expensive or short-range sensors such as cameras and LIDARs. As a result, these algorithms face challenges such as high computational complexity (e.g., map merging), dependencies on well-structured environments, etc. To remedy this gap, we propose a new distributed approach to perform relative localization (RL) using a common Access Point (AP). To achieve this efficiently, we propose a novel Hierarchical Gaussian Processes (HGP) mapping of the Radio Signal Strength Indicator (RSSI) values from a Wi-Fi AP to which the robots are connected. Each robot performs hierarchical inference using the HGP map to locate the AP in its reference frame, and the robots obtain relative locations of the neighboring robots leveraging AP-oriented algebraic transformations.
The approach readily applies to resource-constrained devices and relies only on the ubiquitously-available WiFi RSSI measurement. We extensively validate the performance of the proposed HGR-PL in Robotarium simulations against several state-of-the-art methods. The results indicate superior performance of HGP-RL regarding localization accuracy, computation, and communication overheads. Finally, we showcase the utility of HGP-RL through a multi-robot cooperative experiment to achieve a rendezvous task in a team of three mobile robots. 
%Our approach, Gaussian Processes-based Relative Localization (HGP-RL), combines two pillars. First, the robots locate the AP w.r.t. their local reference frames using novel hierarchical inferencing that significantly reduces computational complexity. Secondly, the robots obtain relative positions of neighbor robots with an AP-oriented vector transformation. 
%even when robots are not in direct line of sight or have different initial orientations. 
%which is ubiquitously available in any wireless-capable device.  
%These limitations make them impractical for resource-constrained robots. 
%Since their introduction and widespread use, wireless local area networks have drawn more attention to Wi-Fi-based localization systems. 

\end{abstract}

\begin{keywords}
Multi-Robot, Localization, Gaussian Processes
\end{keywords}

\IEEEpeerreviewmaketitle

\section{Introduction}
\label{sec:intro}
Multi-robot systems (MRS) have recently drawn significant attention for various use cases, including logistics, surveillance, and rescue. In GPS-denied environments or applications where the privacy of absolute (global) location must be protected, using the robot's relative position to other robots or environment markers is essential as the robots need to cooperate, share data, and complete jobs effectively \cite{latif2023}. Here, relative localization is key in cooperative multi-robot tasks such as rendezvous, formation control, coverage, planning, etc., as well as executing swarm-level behaviors.
%While the progress in simultaneous localization and mapping (SLAM) techniques applied to a mobile robot has reached a significant research maturity \cite{cadena2016past}, relative localization in an MRS is still in the exploratory research stage, given the exploding complexity of localizing multiple robots simultaneously. 
%For instance, the state-of-the-art Kimera-Multi \cite{tian2022kimera} for multi-robot SLAM use RBG-D camera data to obtain a metric-semantic 3-D mesh model of the environment with excellent accuracy. 

While the progress in simultaneous localization and mapping (SLAM) techniques has reached a significant research maturity \cite{cadena2016past,tian2022kimera}, they rely on computationally expensive sensors such as RGB-D cameras and LIDARs
%the necessity to have neighboring robots within close range of each other for the sensor data to detect/overlap and perform relative localization. 
Moreover, the localization mechanism derived from these sensors suffers from the necessity to acquire loop closure through environmental mapping overlap, creating the co-dependence of localization and mapping objectives. 
The research challenges are even more prominent for resource-constrained robots, which have limited computation and sensing capabilities. 
%creating spatio-temporal dependencies on the robots' movements. 
%these techniques rely on cameras and lidars and the computationally-intensive 
In fact, relative localization can be sufficient (without the need for SLAM-based feature-matching and map merging) to perform major cooperative multi-robot tasks like rendezvous, formation control, etc.  \cite{parasuraman2019consensus,chen2022survey}.
Therefore, we focus on alternative sensors such as UWB and Wi-Fi for the relative localization of robots without using environmental maps to avoid the drawbacks of map-based localization.
%in While they may only have limited sensory input in many real-world situations; as a result, it is crucial to perform relative localization using the data at hand.

\begin{figure}[t]
\centering
%\vspace{-7mm}
 \includegraphics[width=0.99\linewidth]{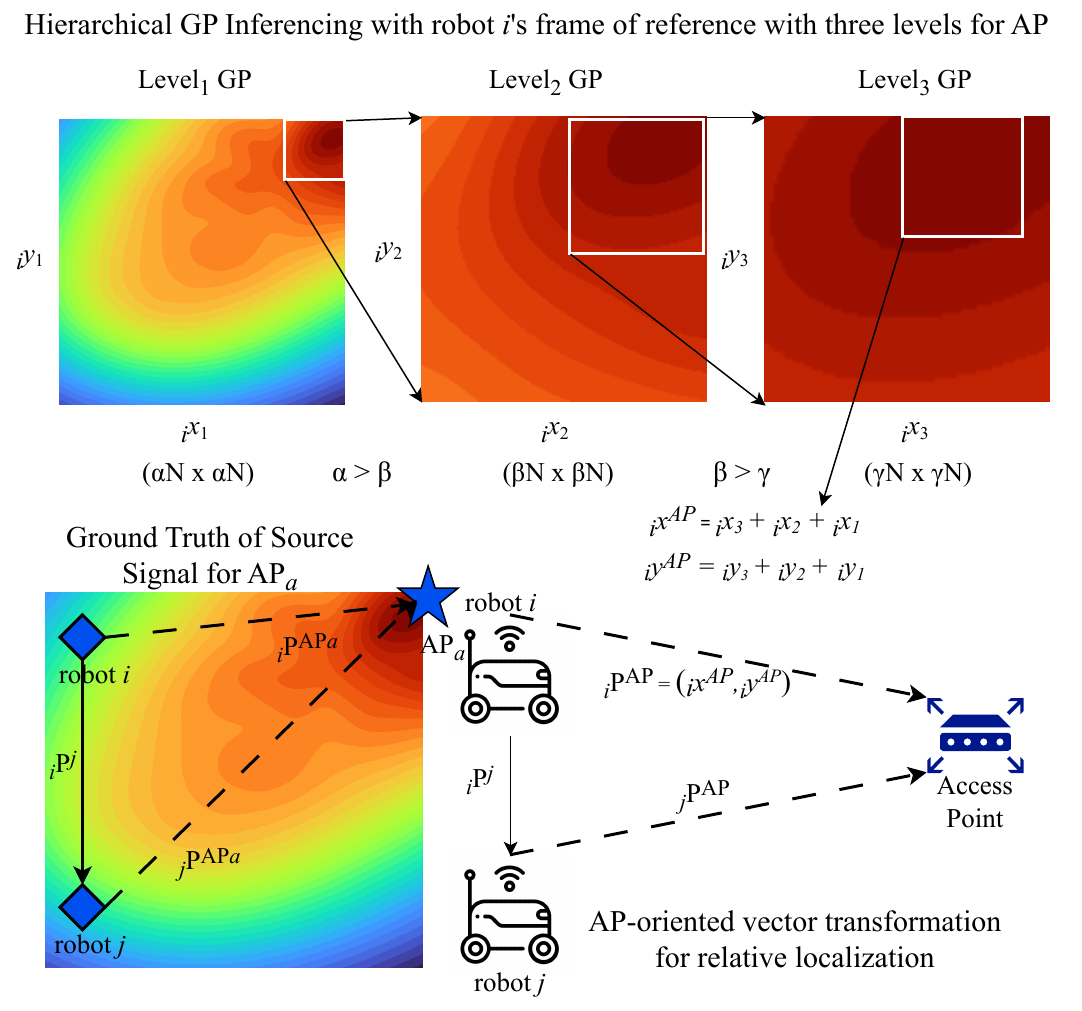}
 \caption{An overview of HGP-RL with three levels of Gaussian Processes and relative localization using Access Point position. $\alpha,\beta$ and $\gamma$ are resolution parameters for $N\times N$ grid space, $_i\mathbf{p}^{AP}$ and $_j\mathbf{p}^{AP}$ are the position vectors from the robot's position to the predicted AP position, and $_i\mathbf{p}^j$ is the relative position vector from robot $i$ to $j$.}
 \label{overview}
 \vspace{-4mm}
\end{figure}

Due to their wide accessibility, Wi-Fi signals present a promising data supply for relative localization tasks. The distance between the robot and the Wi-Fi Access Point (AP) can be estimated using the Received Signal Strength Indicator (RSSI) of Wi-Fi signals \cite{parasuraman2014multi}. However, variables, including multipath fading, shadowing, and background noise, frequently impact RSSI-based localization \cite{pandey2022empirical}. This drives the need for reliable and effective learning algorithms that can accurately obtain relative localization by utilizing the given RSSI data in the face of these difficulties \cite{latif2022dgorl,tardioli2010enforcing,latif2022multi}.
Gaussian Processes (GP) based methods have been explored in the literature \cite{fink2010online,xue2019locate} to model the Wi-Fi signal propagation using RSSI measurements, but they come at a high compute cost. While the training complexity of GPR has been well-investigated in the literature, its inference complexity is a key challenge in using GP-based methods for dominant source search in a large and dense environment, where the prediction (inference) time complexity explodes when the search resolution is very high. 

%TODO% Privacy-preserving - Robots cannot share their location directly. Instead, they can share information about something (like the location of AP) that other robots can trust.

%For tackling relative localization issues, online robotic learning techniques have been used. 
%Bayesian optimization (BO) \cite{zhu2020indoor}, and deep learning \cite{berkenkamp2021bayesian} have recently received attention as adaptive approaches to learning and enhancing localization performance in dynamic and unpredictable situations. Even though these methods have been effective, they frequently necessitate a large quantity of data to converge to an ideal solution, which may not be practicable in real-world situations. For their capacity to model the uncertainty in RSSI observations, probabilistic approaches like GPR have drawn attention \cite{fung2019coordinating}. A more precise portrayal of the underlying uncertainty is made possible by GPR, which models the RSSI-distance connection as a stochastic process. GPR-based techniques, however, can be computationally costly, particularly in high-resolution search areas.

Therefore, we propose a novel distributed algorithm that overcomes the drawbacks of existing approaches to AP location prediction (source search) and relative localization. 
% Our approach uses hierarchical inferencing on the Gaussian Processes regression (GPR \cite{quinonero2005unifying}) map of RSSI to accurately detect the location of the Wi-Fi AP in its local frame of reference with high computational and real-time efficiency. Then, we apply an AP-oriented vector transformation process that allows robots to accurately localize against each other by transforming other robots' coordinates into their own reference frame.
Fig.~\ref{overview} delineates the high-level overview of the two-stage process behind the proposed distributed Hierarchical Gaussian Process for Relative Localization (HGP-RL).
The proposed approach deals with the issue of GPR inference complexity by incorporating hierarchical inferencing, with efficient and distributed processing at each robot for varying dimension spaces and the number of robots in the system.

%illustrating the process behind the hierarchical inferencing for efficient AP localization and using this outcome to estimate the relative locations of other robots.
%\cite{yin2019mrs}
% Using hierarchical inferencing to take advantage of the GPR model's structure and smoothness, our method accurately predicts the location of the Wi-Fi source while avoiding local optimum conditions. The suggested vector transformation method ensures that relative localization is reliable even when the robots do not face each other directly or have different initial orientations.
%The proposed method satisfies the following important characteristics required for a multi-robot algorithm: scalable, range, practical, 
%The proposed method offers an accurate and efficient learning-based relative localization solution by employing GPR on RSSI data. 
The key contributions of this paper are two-fold:
\begin{itemize}
    \item Leveraging on the fact that the RSSI map of an AP is an unimodal distribution, we propose an efficient approach to infer the AP (source) location by hierarchically searching the RSSI map going from a sparse resolution GPR map to a denser resolution GPR map significantly reducing the search complexity from $\mathcal{O}(N^d)$ for a d-dimensional map with N grid points in each dimension to $\mathcal{O}(k (\lambda N)^d)$, where $\lambda$ is a sparsification of resolution at each level of GPR with $\lambda N << N$, and $k$ being the number of levels in the hierarchical GPR. 
    % This allows efficient and accurate AP position prediction relative to the robot, satisfying the computing resource constraints for each robot. This computation reduction is crucial to realize a truly distributed implementation of GPR-based algorithms for source-seeking and localization applications (e.g., on small UAVs with limited onboard computation).
    The novelty of this approach can be extended to other target search and source localization applications for both single and multi-robot scenarios.
    \item Leveraging the fact that all robots connect to and locate the same physical AP in their internal reference frames, we propose an AP-oriented relative localization mechanism leveraging conventional algebraic techniques fusing odometry and IMU data to locate neighbor robots from each robot's frame of reference. The novelty of this approach lies in the scalability and the capability to perform relative localization using ubiquitous sensors on each robot (e.g., WiFi and IMU), allowing implementations on robots with resource and SWaP-constraints.
    %\item We theoretically analyze the approach to evaluate the efficiency and accuracy of HGP-RL.
\end{itemize}

We theoretically analyze the accuracy and efficiency and extensively validate the HGP-RL's performance in the Robotarium-based simulations \cite{wilson2021robotarium}, compared against relevant state-of-the-art approaches in each stage (AP localization and relative localization). 
%: state-of-the-art Terrain Relative Navigation (TRN) \cite{wiktor2020ICRA}, Distributed Graph Optimization for Relative Localization (DGORL) \cite{latif2022dgorl}, and Modified Error Gaussian Process Regression (MEGPR) \cite{xue2019locate}.
We also demonstrate the practicality of HGP-RL using real robot experiments implemented in the ROS framework. These experiments demonstrate the applicability of HGP-RL's relative localization outcome to a multi-robot consensus algorithm, where all robots use HGP-RL to achieve the rendezvous objective.
Finally, we release the source codes (both Robotarium and ROS packages) in Github\footnote{\url{https://github.com/herolab-uga/hgprl}} for use and further development by the robotics community.

\section{Related Work}
\label{sec:relatedwork}
%TODO - first - robot/AP localization using GP mapping. Second, multi-robot relative localization approaches.
%\paragraph*{Wireless Signal Mapping for Robot Localization}
In the literature, relative localization has received substantial study, and several multi-robot system concepts have been put forth \cite{prorok2012low,nguyen2021flexible,wanasinghe2015relative}. 
%Relative localization and its significance in collaborative robotic systems were thoroughly covered by Wanasinghe et al. \cite{wanasinghe2015relative} and Rone and Ben-Tzvi \cite{rone2013mapping}. 
%These methods, however, frequently lacked the flexibility to adapt to changing settings. 
%A Jacobian-free strategy for multi-robot relative localization was introduced in \cite{wanasinghe2014jacobian}, which increased processing efficiency but still had difficulties adjusting to changing settings. 
A low-cost embedded system for relative localization in robotic swarms was proposed in \cite{faigl2013low} to lower hardware costs. Still, the localization performance remained sensitive to environmental changes. 
%Wanasinghe et al. \cite{wanasinghe2014distributed} investigated distributed collaborative localization, and 
Recently, Wiktor and Rock \cite{wiktor2020ICRA} presented a Bayesian optimization-based approach for collaborative multi-robot localization in natural terrain, but the method comes with a high complexity of information fusion and computational costs.
In our previous study \cite{latif2022dgorl}, we proposed a graph-theoretic approach to relative localization. While it addressed some scalability issues of MRS, it still encounters high computational complexity and requires range (or RSSI) sensing between each and every robot combination, which may not be practical in highly resource-constrained systems. Therefore, a system relying on a single environmental anchor like the Access Point of a wireless connection could be more practical.

%\paragraph*{Multi-Robot Relative Localization}
Approaches to learning Wi-Fi signals have been suggested to perform robot (or mobile node) localization. 
Hsich et al. \cite{hsieh2019deep} used deep learning for indoor localization using received signal intensity and channel state information to provide an adaptable solution. 
%Abbas et al. \cite{abbas2019wideep} introduced WiDeep, a Wi-Fi-based indoor localization system using deep learning. 
%Hoang et al. \cite{hoang2019recurrent} employed recurrent neural networks for accurate RSSI indoor localization, and 
Li et al. \cite{li2022self} proposed self-supervised monocular multi-robot relative localization using efficient deep neural networks. However, such learning-based methods need huge volumes of labeled data, often obtained through a dedicated fingerprinting phase, sampling the signals from many APs in the environment (which is also not real-time since scanning all APs in the range takes significantly longer time than obtaining the RSSI of one connected AP). Deep learning-based solutions may frequently experience overfitting to that specific environment or generalization problems in situations with little data. A recent survey on multi-robot relative localization techniques \cite{chen2022survey} %\cite{nessa2020survey,chen2022survey,obeidat2021review}
highlighted the potential of machine learning for localization and the need for more effective and reliable algorithms.

Alternatively, GPR-based active learning approaches have been proposed to perform localization and learning of Wi-Fi signals while overcoming the drawbacks of other learning-based methodologies \cite{fink2010online}. 
Efficient GPR-based robot localization is proposed in \cite{xu2014gp} by using a subset of sampled RSSI observations to reduce the GPR training complexity.
He et al. \cite{he2019calibrating} introduced a calibration method for multi-channel RSS observations using GPs. 
%In contrast, Liang and Liu \cite{liang2019automatic} presented an automatic site survey approach for indoor localization using a smartphone.
Quattrini-Li et al. \cite{quattrini2020multi} applied a GPR-based method for learning the multi-robot communication map of an indoor environment.
%Combining the benefits of deep learning with GPR, Zhang et al. \cite{zhang2019wireless} suggested wireless indoor localization using convolutional neural networks and GPR. 
%While enhancing the scalability of GPR-based approaches, Wang et al. \cite{wang2020indoor} studied indoor radio map creation and localization with deep Gaussian processes, nevertheless confronting computing difficulties. 
Xue et al. \cite{xue2019locate} developed a modified error GPR (MEGPR) to improve the accuracy of device localization using GPR, which can be applied to AP localization. But, it required significant offline fingerprint overhead.
GPR-based techniques, however, can be computationally costly, particularly in high-resolution and large search areas. 

Contrary to existing works, we overcome the limitations of optimization and learning-based approaches by proposing an active learning-based relative localization, which significantly reduces the computational complexity of using the GPR model to locate a signal source (WiFi AP) through hierarchical inferencing. By performing AP-oriented algebraic operations, we obtain the local positions of other robots into the robot's local frame of reference, enabling precise relative localization. 
% Our method uses the GPR model's smoothness and structure through hierarchical inferencing, ensuring accurate localization while avoiding local optimum conditions. Even in situations when robots are not in direct line of sight or have differing beginning orientations, the suggested vector transformation method enables reliable relative localization.
HGP-RL provides a distributed and efficient solution, allowing for a more practical deployment and real-time localization output. Our method is unique in that it can perform relative localization using a single AP (anchor), in contrast to works requiring multiple APs \cite{miyagusuku2018data}.
%for relative localization using sensory data, departing from the limitations of existing state-of-the-art approaches.
Note the proposed HGP is efficient for a source search but not intended for an environment modeling objective typically pursued in the informative path planning works \cite{quattrini2020multi}.

\section{Problem Formulation}
\label{sec:problem}
Assume that a set of $R$ robots are connected to a single (fixed) Wi-Fi AP\footnote{It is possible to extend this work to multiple AP \cite{miyagusuku2018data} (or roaming between APs) setting for large environments, as long as two neighboring robots are connected to at least one common AP in their vicinity.}. Let $_{i}\mathbf{p}^i$ represent the robot's position expressed in its own frame of reference. Here, the subscript on the left of the position variable denotes the frame of reference, and the superscript to the right denotes which object the position refers to. The problem is to obtain relative positions $_{i}\mathbf{p}^j$ for all neighbor robots $j \in R \setminus i $ w.r.t. $i$. We assume a 2D planar environment for simplicity ($d=2$).

Each robot $i$ measures the Received Signal Strength Indicator (RSSI) from the AP (which is ubiquitously available in all modern Wi-Fi devices). We assume that the robots can share information (e.g., their odometry position and predicted AP position) between robots only expressed in their internal frames of reference (i.e., there is no global reference frame available for any robot, so the shared information is not useful for robots directly without transformations). Accordingly, the goal is to develop a distributed method for each robot $i$ to solve this relative localization problem.

%and the shared predicted access point (AP) location $_i\mathbf{p}^{AP}$ along with its position $_{i}\mathbf{p}$ in its local frame. The robot aims to predict the Wi-Fi source position $_{i}\mathbf{p}^{*AP}$ and transform other robots' positions $_i\mathbf{p}^j$ into its local frame using vector transformation for relative localization.

%TODO its known initial orientation $_i\theta$,

\iffalse
For RSSI measurements $\mathbf{Y}$ taken at $\mathbf{X}$ positions, the GP regression model is trained. We employ hierarchical inferencing to predict the AP position with high accuracy.   

% \begin{equation}
% \mathbf{p}_{AP}^{*i} = \arg\max{\mathbf{p}} \mathcal{M}(\mathbf{p} \mid \mathcal{D}),
% \end{equation}

% Where $\mathcal{M}(\mathbf{p} \mid \mathcal{D})$ represents the posterior distribution of the AP position given the data $\mathcal{D}$.

Subsequently, the robot utilizes a novel vector transformation method to transform the positions of other robots into its local frame for relative localization:

\begin{equation}
_{j}\mathbf{p}^i = _i\mathbf{T}^j(_j\mathbf{p} - _{i}\mathbf{p}^{*AP}),
\end{equation}

where $_i\mathbf{T}^j$ is the transformation matrix corresponding to the robot $i$'s initial orientation $_i\theta$.

\textbf{Assumptions:}
\begin{itemize}
% \item The initial orientation of each robot is known.
\item All robots share the predicted AP position and their positions in the local frame.
\item The Wi-Fi source position is stationary during the experiment.
\item All robots are observing the same Wi-Fi source at any instance.
\end{itemize}
%TODO - ADD Ri known - can be relaxed with this and that. 
\fi

% architecture figure removed for brevity 
%\iffalse
\begin{figure}[t]
\centering
%\vspace{-7mm}
 \includegraphics[width=0.98\linewidth]{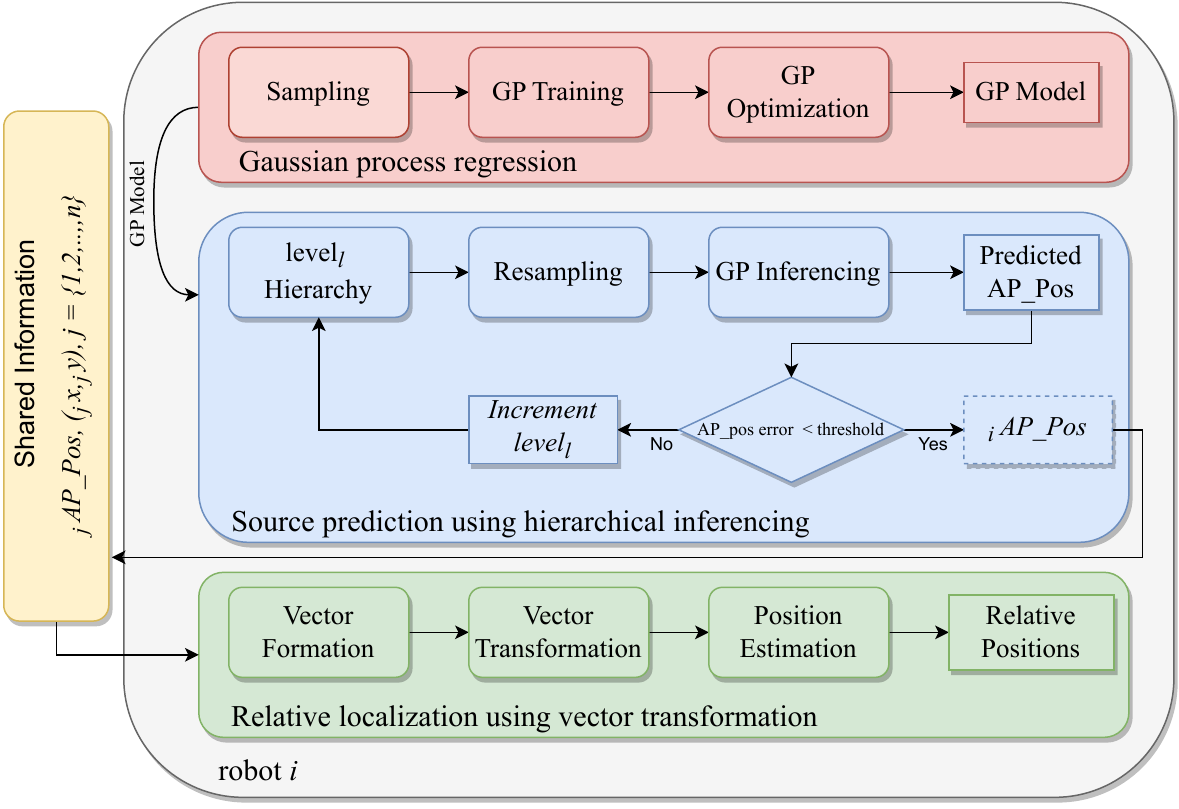}
 \caption{A distributed system architecture of HGP-RL for robot $i$ to relatively localize other robots $j$ using hierarchical GP inferencing of the AP location.}
 \label{architecture}
 %\vspace{-4mm}
\end{figure}
%\fi

\section{Proposed HGP-RL Approach}
\label{sec:gprl}
%We first present our hierarchical inferencing over GPR for accurate source position prediction.
In the proposed distributed architecture, each robot has three major components: the GPR model to train (optimize) and predict the RSSI map, AP location search using hierarchical inferencing over the GPR model, and AP-oriented transformation for relative localization.
% architecture figure removed for brevity 
Fig.~\ref{architecture} delineates the distributed system architecture of the HGP-RL.

\subsection{RSSI-based GP regression}
\label{sec:gpr}
%We model the RSSI values measured along the robot's current trajectory as a Gaussian distribution and use the GPR to forecast the RSSI values to any given value on a boundary around the robot. 
%Due to their accessibility and simplicity, RSSI-based localization techniques have been widely used in indoor and outdoor applications.
%However, environmental changes, multipath propagation, and signal fading can all impact how well these techniques perform. To address the difficulties posed by RSSI-based localization, GPR offers a versatile and reliable method for modeling intricate interactions between inputs and outputs \cite{yiu2017wireless}. 
Wi-Fi RSSI has been used to apply GPR to radio mapping and localization applications successfully \cite{elgui2020learning,fink2010online}, as we can model the noisy RSSI readings as a Gaussian distribution. The technique has proven successful in figuring out the spatial distribution of RSSI values and estimating the user's location based on measured data. 
%Using a single Wi-Fi AP, we extend the application of GPR to the challenge of relative localization among multiple robots in our system.
%\subsubsection{RSSI-based GPR}
Let $\mathbf{X} = {\mathbf{x}_1, \ldots, \mathbf{x}_M}$ represent the $N$ positions within the environment where a robot has measured RSSI values, and $\mathbf{Y} = {y_1, \ldots, y_M}$ correspond to the RSSI measurements acquired at these positions, and the available dataset $\mathcal{D} = {(\mathbf{x}_q, y_q)},{q=1 ... M}$.
%and 

Gaussian Processes \cite{quinonero2005unifying} is a non-parametric probabilistic method to model a random process through few observations. It has a mean function and a covariance function. We use a GPR model to predict the RSSI values at any given position in the surrounding of the robot by leveraging the learned associations between the positions and their respective RSSI values \cite{fink2010online}.
The mean function $m(\mathbf{x})$ of a GP captures the expected value of the function at a given input $\mathbf{x}$. In our case, we choose a constant mean function \cite{quattrini2020multi} to represent the expected RSSI value at any position $m(\mathbf{x}) = \mu$, where $\mu$ is a constant representing the average RSSI in the environment.

The kernel function $k(\mathbf{x}, \mathbf{x}')$ of a GP defines the covariance between the function values at different input points $\mathbf{x}$ and $\mathbf{x}'$. In our case, we use the popular squared exponential (SE) kernel \cite{fink2010online}, also known as the Radial Basis Function (RBF) kernel, which measures the similarity between the positions based on their Euclidean distance:
\begin{equation}
k(\mathbf{x}, \mathbf{x}') = \sigma_f^2 \exp\left(-\frac{||\mathbf{x} - \mathbf{x}'||^2}{2l^2}\right),
\end{equation}
where $\sigma_f^2$ is the signal variance, $l$ is the length scale parameter, and $||\mathbf{x} - \mathbf{x}'||^2$ is the squared Euclidean distance between $\mathbf{x}$ and $\mathbf{x}'$. This kernel function encodes the assumption that the RSSI values at nearby positions are more correlated than those at distant positions. The kernel parameters $\theta = (\sigma_f,l)$ are learned with a dataset of training samples by finding $\theta$ that maximizes the observations' log-likelihood (i.e., $\theta^* = \argmax_{\theta} log P(Y|X,\theta)$). 
After learning, the posterior mean and variance of RSSI prediction for any test location  $q_\star$ are 
%\begin{equation}\mu_{z'}\left( {q}, {q'}, z\right)=k\left({q'}, {q}\right) k(q, q) z \label{eqn:mean} \end{equation}
\begin{equation}
\mu_{\mathcal{M}} [q_{\star}]=m({q})+{k}_{\star}^{{T}}\left({K}+\sigma_{n}^{2} {I}\right)^{-1}({y_q}-m({q})) ,
\label{eqn:mean}
\end{equation}
% \begin{equation} \sigma_{z '}^{2}\left({q},{q'}\right)=k\left({q'}, {q'}\right)-k\left({q'},  {q}\right) k({q}, {q})^{-1} k\left({q}, {q'}\right) \label{eqn:var}\end{equation}
\begin{equation}
\sigma_{\mathcal{M}}^2\left[q_{\star}\right]={k}_{\star \star}-{k}_{\star}^{{T}}\left({K}+\sigma_{n}^{2} {I}\right)^{-1} {k}_{\star} .
\label{eqn:var}
\end{equation}
Here, K is the covariance matrix between the training points $x_q$, $k_*$ is the covariance matrix between the training points and test points, and $k_{**}$ is the covariance between only the test points.
Readers are referred to \cite{fink2010online,xu2014gp} for more information on training and using the GPR model for predicting the RSSI values in non-sampled locations.

The worst-case time complexities for training the GPR model and inferencing one test point with the GPR model are $\mathcal{O}(M^3)$ and $\mathcal{O}(M^2)$, respectively. This is due to the need to invert the $M \times M$ matrix, $K + \sigma_n^2I$ in Eq.~\eqref{eqn:mean} and \eqref{eqn:var}, where $M$ is the size of the training dataset $\mathcal{D}$.
More data points can be added to the dataset, and the GPR model can be re-trained if needed.
In our approach, we assume to obtain the training data of each robot initially using a random walk.
We use the subset sampling (sparse GP \cite{xu2014gp}) to further reduce the training complexity. 
Moreover, training of the GPR model need not be as frequent as inferencing, which must be run in real-time for using the obtained location information.

\begin{figure}
    \centering
    \includegraphics[width=1\linewidth]{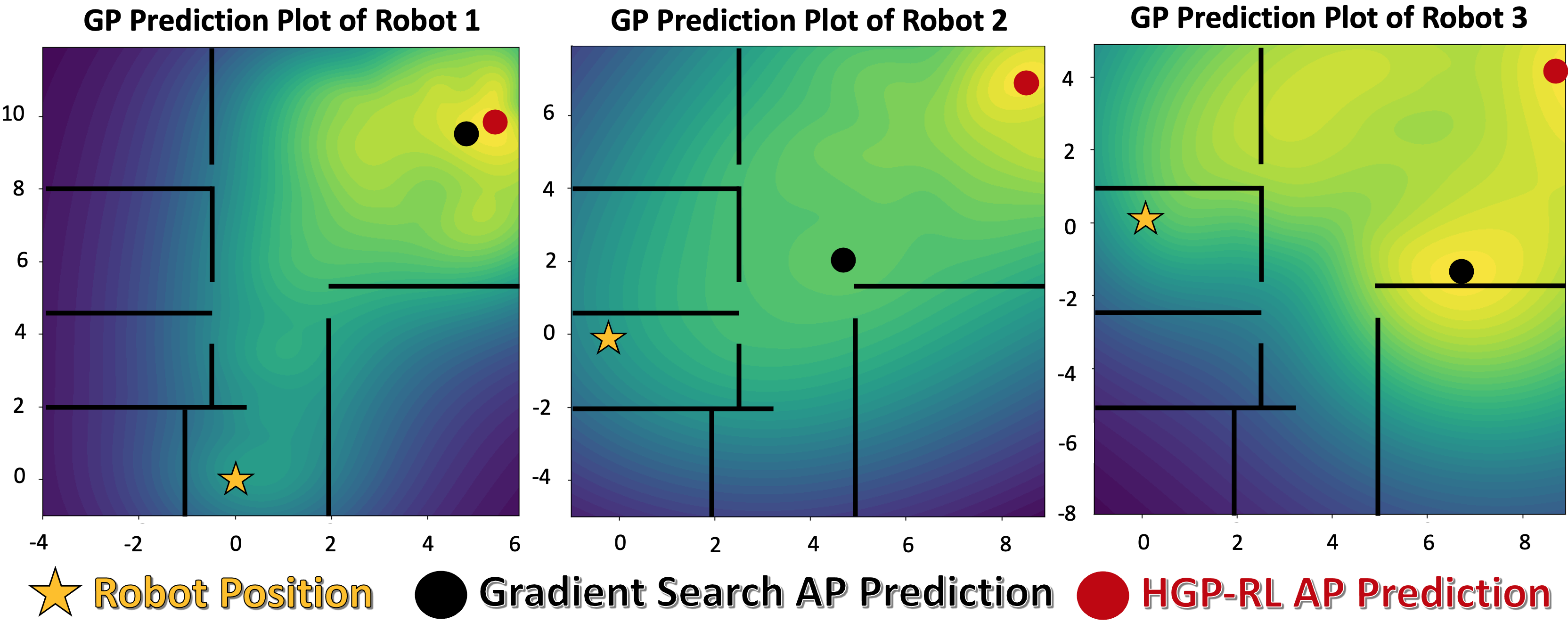}
    \vspace{-6mm}
    \caption{AP predictions for gradient search and hierarchical inferencing.}
    \label{fig:ap_prediction_plots}
    \vspace{-4mm}
\end{figure}
\subsection{AP position Prediction}
%\subsubsection{Source Position Prediction}
\label{sec:source_prediction}
%\textit{Why hierarchical inferencing?}
Traditional methods for locating Access Points (APs) in complex environments often rely on gradient search techniques. These approaches attempt to navigate the search space by iteratively moving toward the direction of the steepest ascent (or descent), aiming to find the global maximum (or minimum) that represents the optimal AP position \cite{hassani2017gradient}. However, real-world scenarios often present significant challenges to the gradient search methodology, primarily due to multiple local maxima in the signal strength landscape. Fig.~\ref{fig:ap_prediction_plots} shows examples of this problem using the real-world data of RSSI maps collected in our laboratory environment (discussed in Sec.~\ref{sec:realworld}). This local phenomenon can misleadingly appear as optimal solutions, causing the typical gradient search technique to locate the source (converge) prematurely and fail to predict the AP's location accurately.
%(results can be seen in Fi.~\ref{fig:ap_prediction_plots})

%TODO
%200 samples. training
%20 iterations optimizing 
\begin{table}[ht]
\vspace{-2mm}
\caption{AP prediction RMSE (m).}
\label{tab:ap_prediction_comparison}
\vspace{-4mm}
\centering
\begin{tabular}{lccc}
\hline
Approach & Robot1 & Robot2 & Robot3 \\ \hline
HGP (Ours) & 0.09 & 0.11 & 0.14 \\
Gradient Search & 0.351 & 6.28 & 5.93 \\ \hline
\end{tabular}
\vspace{-2mm}
\end{table}
On the other hand, a more thorough and effective search of the solution space is made possible by the hierarchical inferencing technique, which uses the problem's multi-resolution structure. We have performed experiments on real-world scenarios and observed that gradient search fails to predict AP position when the initial search position is 1m away from the actual AP. Furthermore, Table~\ref{tab:ap_prediction_comparison} has shown that gradient search has a high prediction error (0.351m - 5.93m) for a space of 10m by 11m. Hence, Gradient search methods are susceptible to local maxima, which the proposed hierarchical inferencing approach effectively avoids by iteratively refining the search from a coarse to a fine resolution. 
Thus, the hierarchical inferencing approach in HGP addresses the critical shortcomings by introducing a scalable, efficient, and robust multi-resolution search strategy. This method mitigates the issue of local maxima and ensures high precision in AP position prediction, thereby significantly enhancing the reliability and applicability of WiFi-based localization techniques in real-world scenarios.

%\textit{Our approach: Hierarchical inferencing}
For each robot $i$, using the trained GP model $\mathcal{M}$ with the mean function $m(\mathbf{x})$ and the kernel function $k(\mathbf{x}, \mathbf{x}')$, we can predict the Wi-Fi AP position $_{i}\mathbf{p}^{*AP}$ in the frame of the robot $i$ by finding the position that maximizes the posterior distribution of the source position given the data $\mathcal{D}$:
\begin{equation}
{}_{i}\mathbf{p}^{*AP} = \argmax_{\mathbf{p} \in {}_{i}\mathcal{B}} \mu_{\mathcal{M}}(\mathbf{p} ) 
%p(\mathbf{p} \mid \mathcal{D}, m(\mathbf{x}), k(\mathbf{x}, \mathbf{x}')),
\label{eqn:predict}
\end{equation}
%where $p(\mathbf{p} | \mathcal{D}, m(\mathbf{x}), k(\mathbf{x}, \mathbf{x}'))$ represents the posterior distribution of the source position conditioned on the data $\mathcal{D}$, and 

Here ${}_{i}\mathcal{B}$ is the boundary of the search space centered around the robot $i$ and the posterior mean $\mu_{\mathcal{M}}(.)$ for a given grid point in ${}_{i}\mathcal{B}$ is calculated using Eq.~\eqref{eqn:mean}. The associated uncertainty (prediction variance) is $\sigma_{\mathcal{M}}^2 ({}_i \mathbf{p}^{*AP})$ from Eq.~\eqref{eqn:var}.
%We assume a fixed boundary size of the RSSI map for all robots.
%$_{i}\mathbf{p}^{*AP}$ considered as initial source position estimation, which can be improved will be improved hierarchical inferencing.
%In our proposed solution, the GP model is trained by taking RSSI samples at random points and then optimizing the model for better inferencing. The GP model learns the link by incorporating the mean function, kernel function, and the collected data.

%GPR involves training the hyperparameters of the GPR model and inferencing using the trained GPR model for test data points. Therefore, we focus on reducing the inferencing complexity in this work, as we must regress the RSSI values over a large workspace to locate an AP. 

The computational complexity of calculating Eq.~\eqref{eqn:predict} considering the full search space consisting of $N$ grid points in each dimension of the 2D search space in ${}_{i}\mathcal{B}$ is $\mathcal{O}((NM)^2)$, where $M$ is the size of the training data. Usually, $N >> M$ for a source localization problem, and since $M$ can be fixed for a given GPR model, the worst-case search complexity can be generalized as $\mathcal{O}(N^2)$.
The complexity of this search will exponentially scale with $N$ (i.e., very high for a fine resolution of search space grids). For instance, for a 20m x 20m workspace with a reasonable centimeter-level resolution of finding the AP location around the robot, the search space of this dense-resolution ${}_{i}\mathcal{B}$ has approx. 40K test points. Applying GPR inferencing on each of these test points could severely plunge the computational efficiency, limiting its applicability to robots with low computational resources.

Therefore, we propose a hierarchical inferencing strategy to make the search for AP more efficient. We follow a multi-resolution search technique, which refines the search for the ideal AP position from coarse to fine \cite{yin2019mrs}. We can quickly determine the position that maximizes the posterior distribution of the AP position while taking advantage of the structure and smoothness of the GPR model by combining hierarchical inferencing with GPR.
%In the GPR model, we aim to determine the position ${_i}\mathbf{p}^{*AP}$ that maximizes the posterior distribution of the AP position. 

The search process begins with the coarsest inferencing level with a low resolution $r_1$ of search space centered around the robot $i$. At each level, we apply GPR for inferencing. Let us denote the AP location found in the first level as $\mathbf{{}_i\mathbf{p}^{AP}_1}=({}_i{x}^{i} + r_1 {}_i{x}^{*AP}_1, {}_i{y}^{i} + r_1 {}_i{y}^{*AP}_1)$, which is the coarsest estimate of the AP position. Here, ${}_i{x}^{i}$ is the reference (origin) location in the lowest resolution, and the AP location $({}_i{x}^{*AP},{}_i{y}^{*AP})$ in this level is found by Eq.~\eqref{eqn:predict} as ${}_i{p}^{*AP}_1 = \argmax_{\mathbf{p} \in {}_{i}\mathcal{B}_k} \mu_{\mathcal{M}}(\mathbf{p} ) $. 
%At each level, we apply GPR for inferencing. 
%We insert these values into our Gaussian Process (GP) regression model, which in mathematical terms, gives us $p(\mathbf{p} \mid \mathcal{D}, m(\mathbf{x_1}), k(\mathbf{x_1}, \mathbf{x_1}'))$. This posterior distribution represents the likelihood of our AP position given our first-level data.
Similarly, in level 2, we refine the resolution $r_2$ and center the search around $\mathbf{_ip^{AP*}_1} \pm \sigma_{\mathcal{M}(_ip^{AP*}_1)}$ (considering the uncertainty from the previous level) and now we obtain $\mathbf{_i\mathbf{p}^{AP*}_2}=({}_i{x}^{AP}_1 + r_2 {}_i{x}^{*AP}_2, {}_i{y}^{AP}_1 + r_2 {}_i{y}^{*AP}_2)$. %The term 'res' indicates our resolution refinement at this stage. 
%This yields the following posterior distribution: $p(\mathbf{p} \mid \mathcal{D}, m(\mathbf{x_2}), k(\mathbf{x_2}, \mathbf{x_2}'))$. 
This is a more refined likelihood of our AP position based on the two-level inferencing.
The process can be continued to search the AP at even finer levels until we reach the maximum levels in the hierarchy.
%At level 3, we further refine the resolution and have $\mathbf{x_3}=(_i{x}_1+_i{x}_2+res,_i{y}_1+_i{y}_2+res)$. This provides the posterior distribution as $p(\mathbf{p} \mid \mathcal{D}, m(\mathbf{x_3}), k(\mathbf{x_3}, \mathbf{x_3}'))$, which is an even finer likelihood of our AP position.
In the end, the final AP position is obtained by summing the data from $K$ levels of the hierarchy: 
\begin{equation}
\begin{aligned}
& {}_{i}\mathbf{p}^{*AP} = ({}_i{x}^{i} + \sum_{k=1}^K r_k {}_ix^{AP}_k, {}_i{y}^{i} + \sum_{k=1}^K r_k  {}_iy^{AP}_k ) , \\
& \text{where, } {}_ix^{AP}_k = \argmax_{\mathbf{x} \in {}_{i}\mathcal{B}_k} \mu_{\mathcal{M}}(\mathbf{x} ) 
\end{aligned}
\label{eqn:predict-hier}
\end{equation}
%$\mathbf{x}= (_i{x}_1+_i{x}_2+_i{x}_3,_i{y}_1+_i{y}_2+_i{x}_3)$. Plugging this data into the GP model gives us the final posterior distribution $p(\mathbf{p} \mid \mathbf{x}, m(\mathbf{x}), k(\mathbf{x}, \mathbf{x}'))$.
%Finally, we can state that the AP position that maximizes this final posterior distribution is our desired ${_i}\mathbf{p}^{*AP}$, which will be used in Eq.\ref{eqn:predict}.
% \begin{equation}
% {i}\mathbf{p}^{*AP} = \arg\max{\mathbf{p}} p(\mathbf{p} \mid \mathbf{x}, m(\mathbf{x}), k(\mathbf{x}, \mathbf{x}')),
% \label{eqn:predict}
% \end{equation}
The uncertainty associated with this hierarchical prediction can also be calculated as
\begin{equation}
\begin{aligned}
& {}_{i}\mathbf{\sigma_{\mathcal{M}}^2}({}_i \mathbf{p}^{*AP}) = \sum_{k=1}^K r_k \sigma_{\mathcal{M}}^2 ({}_i \mathbf{p}^{AP}_k),
\end{aligned}
\label{eqn:predict-hier-std}
\end{equation}
In essence, the hierarchical inferencing strategy finds a position estimate at each level $k$ inside a boundary space of ${}_{i}\mathcal{B}_k$. It ultimately yields a position that maximizes the posterior distribution, ensuring high precision in predicting the AP position at each robot.
The search complexity of this hierarchical inferencing approach can be generalized as $\mathcal{O}(K*(\lambda N)^2)$, where $\lambda << 1$ is a scaling factor of the full resolution GPR if $N$ is the number of grid points in a high-resolution RSSI map. 
%For simplicity, we assume the size of ${}_{i}\mathcal{B}_k}$ in each level of HGP-RL is the same.
%We may efficiently use the smoothness and structure of the GP model by using hierarchical inferencing over GP regression, which enables the precise prediction of the Wi-Fi source position. 
%the robots can localize themselves with great accuracy.

\iffalse
The following sentences sum up the hierarchical inferencing algorithm:

\begin{enumerate}
\item In the frame of reference of robot $i$, initialize the search grid with a coarse resolution.
\item Initialize prediction threshold $\mathcal{T}$ to ensure convergence.
\item For each grid point $\mathbf{p}$, compute the posterior distribution as Eq.~\ref{eqn:predict} using the GP model discussed in Section.~\ref{sec:source_prediction}.
\item Find the grid point $\mathbf{p}$ with the highest posterior probability.
\item Generate a finer sampling grid centered around $\mathbf{p}$.
\item Repeat steps 2-4 until the source position prediction error meets the prediction threshold $\mathcal{T}$.
\item The final $\mathbf{p}$, is the estimated AP position $_{i}\mathbf{p}^{*AP}$.
\end{enumerate}
\fi

We now theoretically analyze how the hierarchical search process in Eq.~\eqref{eqn:predict-hier} approximates the optimal search in Eq.~\eqref{eqn:predict}.
\begin{lemma}
\label{lemma:hierarchy}
The hierarchical inferencing in Eq.~\eqref{eqn:predict-hier} yields AP position predictions with an approximation within a threshold $\epsilon$ of the optimal estimate from a dense resolution GPR RSSI prediction map (Eq.~\eqref{eqn:predict}).
\end{lemma}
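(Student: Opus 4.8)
# Proof Proposal

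The plan is to exploit two structural facts established earlier: (i) the RSSI map of a single AP is a unimodal distribution, so the posterior mean $\mu_{\mathcal{M}}(\cdot)$ has a single dominant maximum over the search region, and (ii) the GPR posterior mean is Lipschitz-continuous as a function of position, with a Lipschitz constant controlled by the kernel hyperparameters (in particular the length scale $l$ and signal variance $\sigma_f^2$ of the SE kernel). The argument then reduces to a discretization-error bound propagated across the $K$ levels of the hierarchy.

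First I would formalize the per-level error. At level $k$, the search is carried out on a grid of resolution $r_k$ over the boundary box ${}_i\mathcal{B}_k$. Because $\mu_{\mathcal{M}}$ is $L$-Lipschitz (which I would justify by differentiating Eq.~\eqref{eqn:mean}: the gradient is a finite linear combination of kernel gradients $\nabla_{\mathbf{x}} k(\mathbf{x},\mathbf{x}_q)$, each bounded in norm by $\sigma_f^2 / (l\sqrt{e})$ times the weights $(K+\sigma_n^2 I)^{-1}(y_q - m)$), any grid point within a half-cell of the true maximizer attains a $\mu_{\mathcal{M}}$-value within $L \cdot r_k \sqrt{d}/2$ of the optimum, and its location is within $r_k\sqrt{d}/2$ of the true argmax restricted to ${}_i\mathcal{B}_k$. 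Second, I would argue that unimodality guarantees the coarsest level ${}_i\mathcal{B}_1$ brackets the global maximizer (the AP), so no level ``loses'' the basin of attraction; combined with the fact that level $k+1$ re-centers on the level-$k$ estimate inflated by the posterior standard deviation $\sigma_{\mathcal{M}}({}_i\mathbf{p}^{AP}_k)$, the true AP location remains inside ${}_i\mathcal{B}_{k+1}$ at every level. Third, I would sum the per-level localization errors: the total deviation of ${}_i\mathbf{p}^{*AP}$ from the dense-grid optimum of Eq.~\eqref{eqn:predict} is bounded by $\sum_{k=1}^K r_k \sqrt{d}/2$ plus the residual dense-grid discretization term, and choosing the resolutions $r_k$ geometrically decreasing (consistent with the sparsification factor $\lambda$) makes this a convergent sum bounded by a constant multiple of the finest resolution $r_K$. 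Setting $\epsilon$ to this bound completes the claim; one can also pass from the location bound to a bound on $|\mu_{\mathcal{M}}({}_i\mathbf{p}^{*AP}) - \mu_{\mathcal{M}}(\mathbf{p}^{\mathrm{opt}})|$ via Lipschitzness.

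I expect the main obstacle to be rigorously justifying that the hierarchy never discards the correct basin, i.e., that the true AP stays within ${}_i\mathcal{B}_k$ for all $k$. This needs the unimodality assumption to be used carefully: on a coarse grid the sampled maximizer could in principle sit in a spurious ridge induced by measurement noise rather than the true unimodal trend, so I would need either a mild assumption that the noise level $\sigma_n$ is small relative to the signal gradient over a cell of size $r_1$, or an assumption that ${}_i\mathcal{B}_1$ is chosen large enough to contain the AP and the re-centering margin $\pm\sigma_{\mathcal{M}}$ is conservative enough to absorb the level-to-level error. A secondary, more technical obstacle is making the Lipschitz constant $L$ explicit and environment-independent; I would likely state it in terms of $\sigma_f$, $l$, and a bound on $\|(K+\sigma_n^2 I)^{-1}(\mathbf{y}-m)\|_1$, and absorb it into the constant defining $\epsilon$ rather than tracking it tightly.
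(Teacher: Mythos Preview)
Your proposal is correct as a proof strategy and is considerably more rigorous than the paper's own argument. The paper's proof is essentially declarative: it introduces notation $\mathbf{p}_{dense}$, $\mathbf{p}_{hier}$, notes that the same pre-trained GPR model is used at increasing resolutions $r_1 < \dots < r_K$, and then simply asserts that ``for an infinitesimally small threshold $\epsilon > 0$, we can observe that there exists a level $k$ such that $\|\mathbf{p}_{hier}(k) - \mathbf{p}_{dense}\| < \epsilon$.'' No Lipschitz bound, no basin-containment argument, and no per-level discretization analysis is supplied. By contrast, you (i) derive a Lipschitz constant for $\mu_{\mathcal{M}}$ from the SE kernel and the weight vector $(K+\sigma_n^2 I)^{-1}(\mathbf{y}-m)$, (ii) use unimodality plus the $\pm\sigma_{\mathcal{M}}$ re-centering to guarantee the true AP is never excluded from ${}_i\mathcal{B}_k$, and (iii) sum the geometric per-level errors to an explicit multiple of $r_K$. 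What the paper's route buys is brevity; what yours buys is an actual quantitative bound with named assumptions (noise small relative to the coarse-grid signal gradient, or a conservative level-1 box), which is precisely what is needed to make the lemma a theorem rather than a heuristic. The obstacles you flag---especially that a noisy coarse grid could select a spurious cell---are real and are not addressed in the paper at all.
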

\begin{proof}
Let's define $\mathbf{p}_{dense}$ and $\mathbf{p}_{hier}$ to be the AP position as predicted by the dense resolution GPR and the hierarchical inferencing, respectively. Suppose the hierarchy of the GPR model has $K$ levels, and each level is inferred with a density resolution of $r_j$ at level $j$, where $1 \leq j \leq K$ and $r_1 < r_2 < ... < r_K$. Let's further assume that we use the same pre-trained GPR model in both cases. %, correctly representing the uncertainty in their predictions.
Our goal is to show that as we move deeper into our hierarchical GPR model, we use the GPR model with increasing density resolutions to predict the AP position. The prediction error should decrease with each level, and this error will ultimately converge to a value less than $\epsilon$.
%after a sufficiently large number of iterations $t$. 
Formally, for an infinitesimally small threshold $\epsilon > 0$, we can observe that there exists a level $k$ such that we have $||\mathbf{p}_{hier}(k) - \mathbf{p}_{dense}|| < \epsilon$. 
%The magnitude of this difference decreases with $k$ since, as $k$ increases, we move to a denser level in the hierarchical model, thereby getting a more accurate prediction. 
\end{proof}

The above lemma signifies that hierarchical inferencing over GPR allows for significantly improved computational efficiency while keeping the accuracy close to an optimal outcome of a dense-resolution GPR inferencing map.

\iffalse
\begin{algorithm}[tb]
\SetAlgoLined
\For{robot $i$ in the set of connected robots $R$}{
\textbf{Input}: $M$ initial training samples (e.g., from a random walk) with a GPR training dataset $\mathcal{D} = {({}_i\mathbf{p}^i_k, {}_irssi_k)}$, where ${{}_i\mathbf{p}^i_m=1 ... M}$ are robot positions (odometry) in local frames, and $rssi_m$ are the corresponding Wi-Fi RSSI values. 

\textbf{Output}:  ${}_i\mathbf{p}^{AP}$ and $_i\mathbf{p}^{j}$, the position of AP and all neighbor robots $j$ in the local frame of robot $i$. 

\Begin{
Train/Re-train GP model $\mathcal{M}$ with training samples $\mathcal{D}$, mean function $m(\cdot)$, and kernel function $k(\cdot, \cdot)$\;
Perform hierarchical inferencing (Eq.~\eqref{eqn:predict-hier}) to find the AP position $_i\mathbf{p}^{AP}$\;
\For{neighbot robot $j$}{
    Send $_i\mathbf{p}^{AP}$ and ${}_i\mathbf{p}^i$ to robot $j$\;
    Receive $_j\mathbf{p}^{AP}$ and ${}_j\mathbf{p}^j$ from robot $j$\;
    %Compute the transformation matrix $_i\mathbf{T}^j$ using the initial robot orientations and robot positions in local frames\;
    Compute ${}_i\mathbf{p}^j$, the relative position of robot $j$ in the robot $i$'s local frame using AP-oriented transformation in Eq.~\eqref{eqn:vector-transform}\;
}
%\textbf{return} $_i\mathbf{p}^{j}$ and $_i\mathbf{p}^{j}$\;
}
}
\caption{Distributed Hierarchical Gaussian Processes for Relative Localization (HGP-RL)}
\label{alg:gp_relative_localization}
\end{algorithm}
\fi

\subsection{AP-oriented relative localization}
\label{sec:relative_localization}
We assume that each robot $i\in R$ can predict the Wi-Fi AP position ${}_i\mathbf{p}^{AP}$ (using the hierarchical GPR map of the RSSI readings, as described in Sec.~\ref{sec:source_prediction}) and obtain its current location ${}_i\mathbf{p}^{i}$ (e.g., using odometry and IMU) in its internal frame of reference $i$ and can share these data with other robots.
%With this configuration, we can use vector transformation to determine where robot $j$ is about robot $i$.
%\subsection{Vector Transformation}
%Let $_{i}\mathbf{p}^{AP}$ represent the position of the Wi-Fi AP as predicted by robot $j$ in the frame of reference of robot $i$, and let $_{j}\mathbf{p}^{AP}$ denote the position of the Wi-Fi AP as predicted by robot $j$ in its frame of reference. 
Leveraging the fact that the robots are locating the same (non-moving) AP, we can infer that 
\begin{equation}
({}_{i}\mathbf{p}^j - {}_{i}\mathbf{p}^{AP}) = {}_{i}\mathbf{R}^j ({}_{j}\mathbf{p}^j - {}_{j}\mathbf{p}^{AP}) .
\label{eqn:vector-AP}
\end{equation}
Here, $_{i}\mathbf{R}^j$ denotes the rotation matrix that transforms a vector from robot $j$'s frame to robot $i$'s frame.
Eq.~\eqref{eqn:vector-AP} indicates that the vector expressing the line between a robot and AP should be equal in two different frames of reference as long as we apply rotation between such frames. 
%In the given system, robot $j$ shares $_{j}\mathbf{p}^{AP}$ to robot $i$ along with its current position $_{j}\mathbf{p}$. Furthermore, let $_{i}\mathbf{p}^{AP}$ represent the position of the Wi-Fi AP as predicted by robot $i$ in its frame of reference.
% , and let $_{i}\mathbf{t}^j$ represent the translation vector between the origins of robot $i$ and robot $j$. 
% confusing brevity %Assuming a 2-D space, we define a transformation matrix $_{i}\mathbf{T}^j$ = $_{i}\mathbf{R}^j$.
% \begin{equation}
% _{i}\mathbf{T}^j = \begin{bmatrix}
% _{i}\mathbf{R}^j & _{i}\mathbf{t}^j \
% \mathbf{0} & 1
% \end{bmatrix},
% \label{eqn:transformation}
% \end{equation}
% Where $\mathbf{0}$ is a row vector of zeros with the same dimension as $_{i}\mathbf{t}^j$.
By applying AP-oriented algebraic transformation (Eq.~\eqref{eqn:vector-AP}), a robot $i$ can obtain the relative position of a robot $j$ using% from $j$'s frame to $i$'s frame using
\begin{equation}
_{i}\mathbf{p}^{j} = {}_{i}\mathbf{p}^{AP} + {}_{i}\mathbf{R}^j (_{j}\mathbf{p}^{j} - {}_{j}\mathbf{p}^{AP}).
\label{eqn:vector-transform}
\end{equation}
%Thanks to this transformation in Eq.~\eqref{eqn:vector-transform}, every robot $i$ can now locate every neighbor robot $j$ in its local frame of reference. 
A caveat here is that the rotation matrix for every neighbor robot should be known.
%TODO%
We posit that robots know their initial orientations (to obtain $_{i}\mathbf{R}^j$), with a reasonable assumption that robots start from a command station in practice, where initial configuration can be controlled. Alternatively, they can also obtain this in real-time using their magnetometers in the IMU (magnetic heading acting as a proxy for global orientation). This is useful for aerial robots but not ground robots, given that a magnetic field is sensitive to man-made structures, especially in urban environments. Nevertheless, this limitation can be alleviated by the use of Angle-of-Arrival estimation techniques \cite{jadhav2022toolbox} or RSSI-based relative bearing estimation \cite{parasuraman2019consensus,parashar2020particle} for the relative bearing of neighboring robots. Precise estimation of this rotation matrix is out of the scope of this paper.

%The assumption here is that the robots know their relative orientations (to obtain the rotation matrices transforming from one robot to another). This can be practically achieved by setting/knowing the initial orientations of the robots. Alternatively, this assumption can be relaxed by combining HGP-RL with Angle-of-Arrival estimation techniques \cite{jadhav2022toolbox} or RSSI-based relative bearing estimation \cite{parasuraman2019consensus}.

Considering all neighbor robots, the time complexity of this relative localization component is a constant $\mathcal{O}(|R|-1)$. Coupled with a low memory and communication complexity, HGP-RL enables a practical solution for relative localization.

\section{Experimental Validation}
\label{sec:experimental_setup}

We use the Python-based Robotarium \cite{wilson2021robotarium} simulator\footnote{\url{https://github.com/robotarium/robotarium_python_simulator}} as our experiment testbed. %it enables swarm-level behaviors using the obtained relative localization data in the future. Also, the 
It provides remote accessibility to swarm robots, enabling a reliable and regulated setting for carrying out our experiments and fostering the repeatability and confirmation of the findings made in this study.

% The GPR model must be updated, and the relative positions of the robots must be predicted for the proposed localization and learning approach to work. 

\subsection{Experimental Setup}
Robotarium has a $3.2 \times 2$ meter rectangular area. We deploy three ($|R|=3$) robots with unicycle dynamics (which can be converted from a single integrator dynamics) to populate the simulation environment, and each robot's initial position is randomly chosen (but their initial orientation is set the same). 
%The robots may randomly stroll in the predetermined area because they are set up to use a single integrator controller. 
To maintain a controlled experimental setup, virtual limits are added to the simulation environment to prevent robots from straying outside the designated area.

The Robotarium's server communicates with the robots during the tests, enabling the exchange of data necessary for the localization process. The platform gathers and records information about the positions of the robots, as well as AP location forecasts derived from the GPR model. 
% The robots move randomly around the workspace while running Alg.~\ref{alg:gp_relative_localization} for each robot. 

Based on the experimental setup, we performed two sets of experiments: First, we analyzed the performance of the hierarchical inferencing algorithm (Sec.~\ref{sec:source_prediction}) for AP location estimation in each robot. Then, we integrate this algorithm with AP-oriented transformation and analyze the combined relative localization performance of HGP-RL.
% (Alg.~\ref{alg:gp_relative_localization}). 
The experiments are conducted for 300 iterations and 10 trials each.
In our trials, the GPR model is initially trained on ten random samples from a random walk stage. After each iteration, the model is updated, improving its prediction accuracy by incorporating the most recent data collected from the robots. In all experiments, the simulated AP is located at the center of the workspace. We simulated the RSSI from AP to a robot $i$ using 
\begin{equation}
    rssi({}_i\mathbf{p}^i) = rssi_{d0} - 10\eta \log_{10} (\| {}_i\mathbf{p}^i - {}_i\mathbf{p}^{AP}\|) - \mathcal{N}_s(0,\sigma_s^2), % - \mathcal{N}_mp(0,\sigma_m^2)
    \label{eqn:rssi}
\end{equation}
where $rssi_{d0} = -20 dBm$ is the reference RSSI value close to the source, $\eta = 3$ is a path loss exponent, and $\mathcal{N}_s$ is a Gaussian noise to represent shadow fading with a std. $\sigma_s = 2 dBm$. We further added a zero-mean Gaussian noise with a std. $\sigma_{mp} = 2 dBm$ to represent multipath fading effects. These values are set based on typical 2.4GHz Wi-Fi RSSI propagation characteristics in indoor settings \cite{parasuraman2014multi}.

%The resolution settings for different approaches can be seen in Table.~\ref{tab:exp-config}. 

%The performance of the suggested localization approach is assessed through the processing and analysis of this data. 

\iffalse % TODO MAY BE UNNECESSARY?
\subsection{GPR Model}
Based on the available information from the robots' positions and the AP location estimates, a GPR model is used to forecast the source location. 
%A non-parametric probabilistic generalized projection rule (GPR) model can capture complicated relationships between input and output data, offering predictions and estimating the uncertainty involved. 
GPR models are ideally suited for jobs involving erratic data or noisy observations because of this property, which is why they successfully predicted the source position in our trials. 
The GPR model is then updated using this information by adjusting its hyperparameters and adding new data. 
\fi

\iffalse
\begin{table}[t]
\begin{center}
\caption{Hierarchical Configurations }
\label{tab:hier-config}
% \vspace{-4mm}
\resizebox{\linewidth}{!}{
\begin{tabular}{|c|c|c|c|c|}
\hline
\textbf{Simulation Parameters}
& \textbf{L 0} & \textbf{L 1} & \textbf{L 2} &\textbf{L 3}\\
\hline
Resolution & 0.1 & 0.05 & 0.025 & 0.0125\\
\hline
\end{tabular}
}
\end{center}
\vspace{-6mm}
\end{table}
\fi

\begin{figure}[t]
    \centering
%\begin{subfigure}{\linewidth}
\centering
\includegraphics[width=\linewidth]{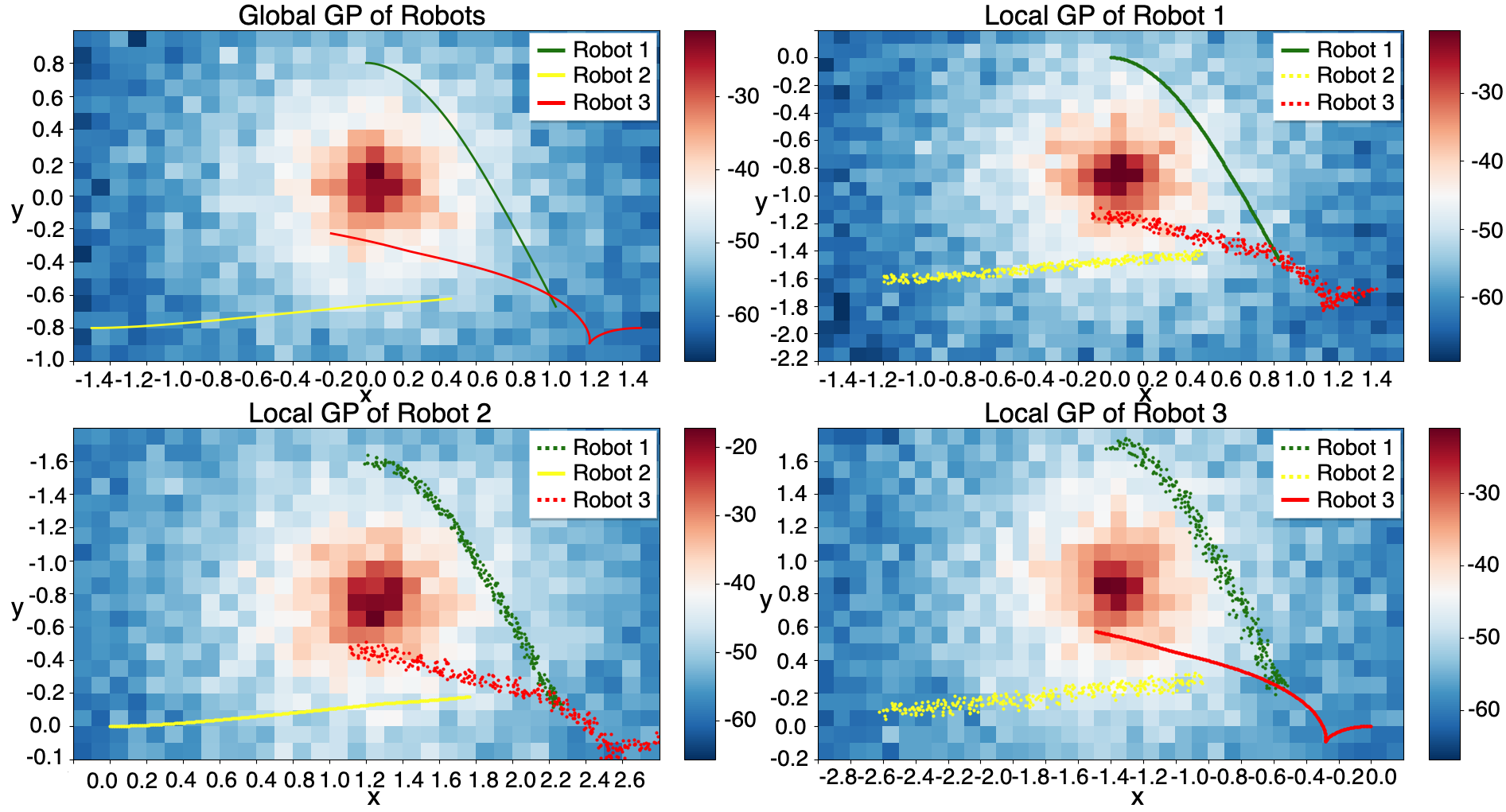}
%\end{subfigure}
%\vspace{-4mm}
\caption{Gaussian processes inferencing for global and local frames along with the ground truth and predicted robot trajectories of HGP-RL.}
    \label{fig:loc_trjectory_plots}
    %\vspace{-4mm}
\end{figure}

%\section{Results and Discussion}
%\label{sec:results}
%Below, we discuss the results of AP location predictions and relative localization performances.

\begin{figure*}[t]
    \centering
%\begin{subfigure}{\linewidth}
\centering
\includegraphics[width=0.99\linewidth]{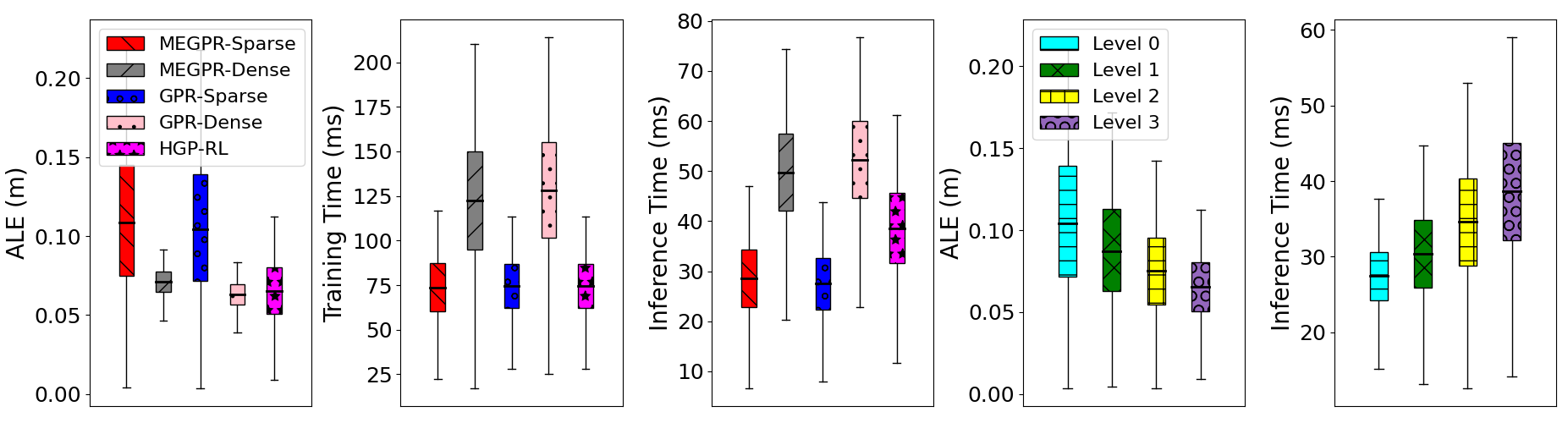}
% \includegraphics[width=0.32\linewidth]{source_train_plot.pdf}
% \includegraphics[width=0.32\linewidth]{source_inference_plot.pdf}
%\end{subfigure}
%\vspace{-2mm}
\caption{AP prediction performance plots (Absolute Localization Error and GPR Training and Inferences Times) of HGP-RL (ours) compared with GPR \cite{quattrini2020multi} and \cite{xue2019locate} (with Sparse and Dense resolutions). The effect of the number of hierarchy levels in HGP-RL is also shown in the right-most plots.}
    \label{fig:source_plots}
\end{figure*}

\subsection{AP Source Prediction}
We compared HGP-RL with two state-of-the-art GPR-based source localization approaches: 1) standard GPR model \cite{quattrini2020multi} to predict the RSSI in a dense resolution map; 2) modified error GPR (MEGPR) proposed in \cite{xue2019locate}. We applied a spatial resolution of 0.1m (Sparse) and 0.0125m (Dense) in both these methods. The proposed hierarchical HGP-RL used a 4-level hierarchy with the following coarse-to-grain resolutions of [0.1m,0.05m,0.025m,0.0125m].
We fix the size of the search space in all levels of the HGP-RL hierarchy as $|{}_{i}\mathcal{B}| = 30\mathrm{x}30$ (equivalent to the Sparse setting), whereas the Dense models will have 240x240 grid points.
Fig.~\ref{fig:loc_trjectory_plots} shows the predicted GPR map at the coarsest resolution and the estimated relative positions in a trajectory of a sample trial conducted with our HGP-RL implementation in ROS.

We compare the performance of HGP-RL with the standard GPR and the MEGPR approaches in terms of the absolute localization error (ALE) of the AP location estimates, time to train the GPR models, and time to obtain inference with AP location estimates with the trained GPR models.

Fig.~\ref{fig:source_plots} shows the results of this experiment. HGP-RL outperformed the competitors by matching the ALE of a dense resolution GPR with less inference time for predictions.
%HGP-RL outperformed both GPR and MEGPR in terms of accuracy and efficiency. The findings show that in terms of source ALE (Average Localization Error), training time, and inference time, HGP-RL with hierarchy outperforms its competitors.
The ALE of AP localization is improved by HGP-RL with hierarchy, with an error roughly 36\% lower than GPR/MEGPR-Sparse and roughly 8\% lower than MEGPR-Dense. 
%This indicates our method's increased accuracy, which is necessary for relative localization jobs. 
The training and inference times for our HGP-RL have been kept reasonably low. HGP-RL takes about 40\% less time to train than both GPR-Dense and MEGPR-Dense, while matching the training time of Sparse resolution models. 
Additionally, HGP-RL balances the inference time between the Sparse and Dense resolution models by exploiting the advantages of the hierarchical inferencing approach. HGP-RL is around 27\% quicker than the Dense approaches but 35\% slower than the Sparse approaches. However, the higher accuracy in ALE justifies this increase in inference time. 
Furthermore, the effect of hierarchy in HGP-RL is also analyzed. The Inference Time is linear with the number of levels, as can be seen in the right-most plots in Fig.~\ref{fig:source_plots}, but the improvement in the ALE exponentially decreases with more hierarchy levels saturating at 4 levels (which we set for the next set of experiments).
%and provides evidence for its effectiveness with improved source prediction accuracy. 
%These findings demonstrate the dominance of the HGP-RL with hierarchy and the possibility of our technique in real-world settings where accuracy and efficiency are key considerations.
These results demonstrate the effectiveness of our HGP-RL because it can deliver quick and precise AP localization.

\iffalse
\subsection{Localization and Learning Method}

The robots estimate the relative positions of their peers using the projected source location after the updated GPR model. Each robot participates in the localization process using a consensus-based methodology by sharing its source location forecast and position with its neighbors. Then, considering the corresponding uncertainties from the GPR model, the robots compute the weighted average of the received predictions and positions. Using the knowledge of their peers, the robots can cooperatively improve their localization estimations using this consensus-based method. The performance of the suggested localization and learning strategy is evaluated by contrasting the robots' anticipated relative positions with their actual positions in the simulated environment. Through iterative learning and localization, robots become more capable of accurately predicting their peers' positions, demonstrating the effectiveness of the proposed method.
\fi

\subsection{Relative Localization Results}
%\subsection{Multi-Robot Simulations}
To validate the accuracy and robustness of the relative localization, we compare the performance of HGP-RL with two state-of-the-art range-based relative localization algorithms for their relevance to our proposed approach: 1) Terrain Relative Navigation (TRN) \cite{wiktor2020ICRA} and 2) Distributed Graph Optimization for Relative Localization (DGORL) \cite{latif2022dgorl}. 

We employ the RMSE of relative localization error metric to assess how well the proposed approach compares with the state-of-the-art methods. The average localization error is quantified by RMSE, which also captures differences between the robots' anticipated relative positions and their actual positions in the simulated environment. 
%Insights into the convergence rate and robustness of the learning approach are shown by analyzing RMSE values throughout the experiments. 
%This reveals how well the learning method works to improve the GPR model and adapt to new data under various initial conditions and random walks. 
%We can confirm the efficacy and dependability of the suggested localization strategy in the Robotarium trials thanks to this thorough study using RMSE. 
We also present the efficiency metrics in terms of communication payload (KB) for data sharing between the robots, CPU utilization (\%), and the processing time (ms) per iteration for each approach.
%to validate the efficiency of the proposed approach.
Fig.~\ref{fig:loc_performance_plot} presents the results of these key metrics, with RMSE plots shown under different shadowing noise levels (in dBm) of simulated RSSI in Eq.~\eqref{eqn:rssi}.

%In this section, we report the findings of our studies, comparing the accuracy (RMSE) and efficiency (CPU consumption and processing time) of the proposed HGP-RL method with the state-of-the-art localization approaches; TRN \cite{wiktor2020ICRA}, DGORL \cite{latif2022dgorl}, and HGP-RL.

% \begin{figure*}[t]
%     \centering
% %\begin{subfigure}{\linewidth}
% \centering
% \hspace{-8mm}
% \includegraphics[width=0.99\linewidth]{error_plot.png}
% %\end{subfigure}
% \caption{Localization performance comparison plot for three robots using TRN \cite{wiktor2020ICRA}, DGORL \cite{latif2022dgorl} and proposed HGP-RL approach }
%     \label{fig:loc_error}
% \end{figure*}

\begin{figure*}[t]
\centering
%\vspace{-5mm}
%\begin{center}
\centering
 \includegraphics[height=0.25\linewidth]{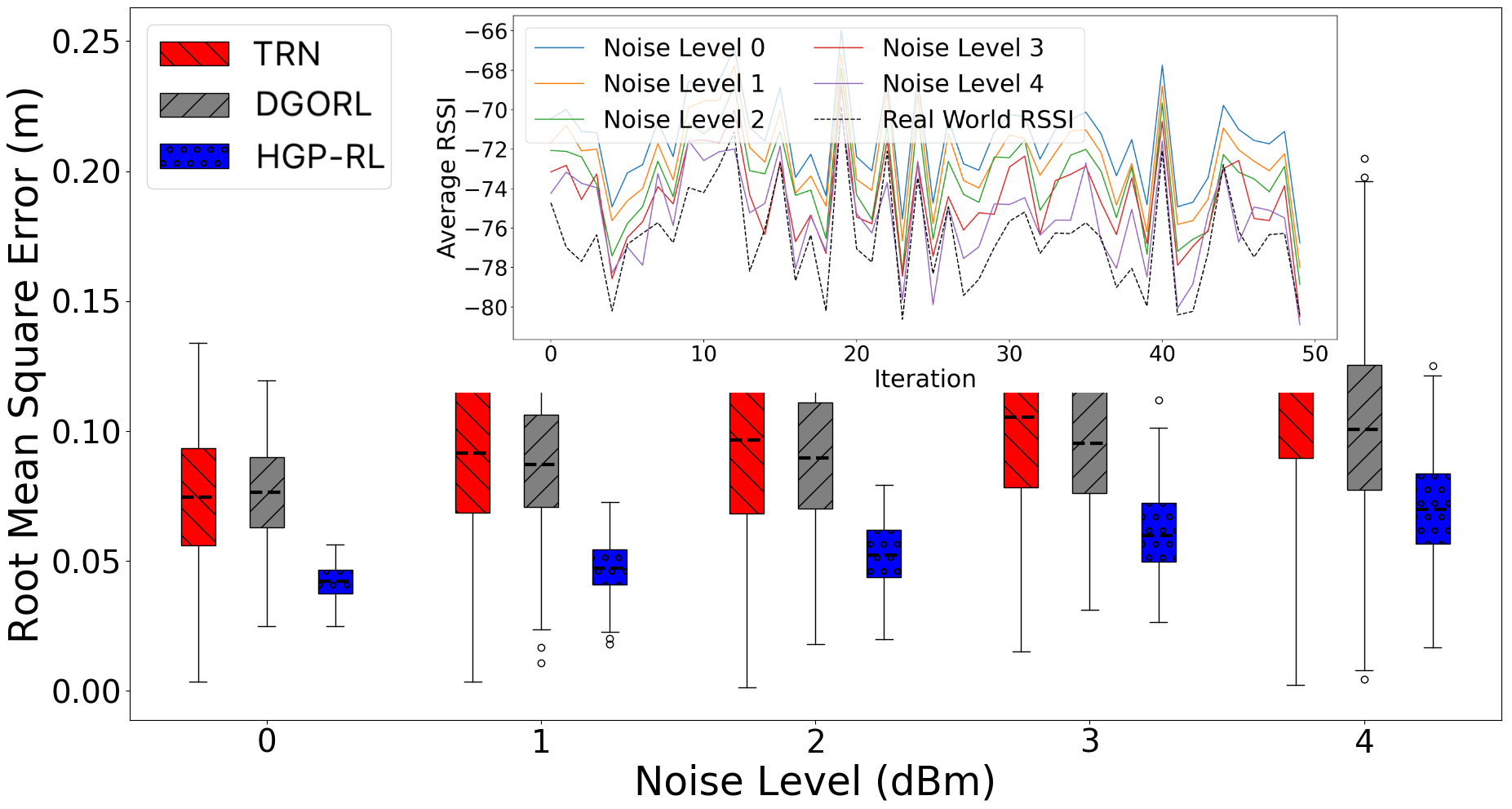}
 \includegraphics[height=0.25\linewidth]{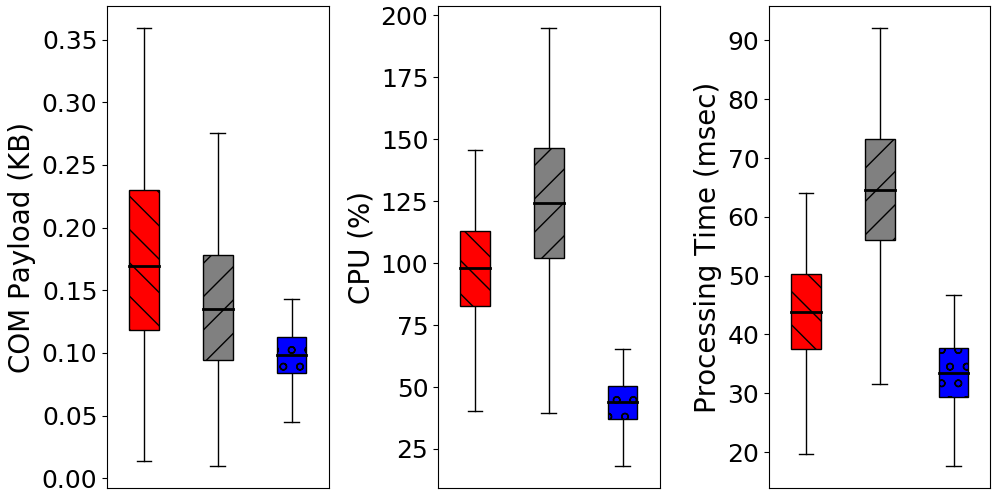}
%\end{center}
%\vspace{-2mm}
 \caption{Results of various performance metrics. From left to right, we show the localization accuracy (RMSE) under different simulated RSSI noise levels (the embedded plot represents the respective RSSI variations), the communication payload, the computation overhead, and the run time performance of various relative localization approaches. It can be seen that the HGP-RL consistently outperforms other approaches in all the metrics. }
 \label{fig:loc_performance_plot}
 %\vspace{-4mm}
\end{figure*}

\iffalse
\begin{figure}[t]
    \centering
    \caption{Computing performance of different localization approaches.}
    \label{fig:loc_computing_plot}
\end{figure}
\fi

\iffalse
\begin{figure}[t]
    \centering
%\begin{subfigure}{\linewidth}
\centering
\includegraphics[width=0.99\linewidth]{rel_loc_plots.png}
% \includegraphics[width=0.32\linewidth]{loc_cpu_plot.pdf}
% \includegraphics[width=0.32\linewidth]{loc_proc_plot.pdf}
%\end{subfigure}
\caption{Localization performance evaluation comparison plots of SOTA approaches per iteration; TRN \cite{wiktor2020ICRA} (Bayesian fusion and collaborative localization), DGORL \cite{latif2022dgorl} (graph optimization and relative localization), and the proposed approach: (\textbf{Left}) Localization error, (\textbf{Center}) CPU utilization per iteration by each robot and (\textbf{Right}) Average processing time for position estimation by each robot.}
    \label{fig:loc_plots}
\end{figure}
\fi

\subsubsection{Accuracy}
Compared to alternative techniques, HGP-RL exhibits a considerable improvement in the average RMSE. 
As shown in Fig.~\ref{fig:loc_performance_plot}, the reduced RMSE values obtained by HGP-RL prove its advantage in various settings and starting conditions. %Fig.~\ref{fig:loc_trjectory_plots} shows the Gaussian prediction for global and local frames along with the ground truth and predicted trajectories.
It is more accurate than TRN by roughly 42 percent and roughly 30.84\% more accurate than DGORL in the highest noise level, showing up to 2x improvements. 
%Its accuracy is roughly 34.01\% more than the Sparse version of HGP-RL without Hierarchy or a ratio of about 1.52. The Hierarchy version of HGP-RL has a little higher average RMSE than the Dense version of the algorithm without Hierarchy, though. 
In addition to having better accuracy than previous approaches, the HGP-RL technique also shows less fluctuation in RMSE values among the three robots. 
Our analysis of the RMSE values for various approaches highlights the greater accuracy of our HGP-RL strategy in predicting robots' relative positions. 
The reliability of this consistency for localization tasks is highlighted. The HGP-RL establishes itself as a good contender for practical robot localization tasks by offering higher accuracy and consistency.

\subsubsection{Efficiency}
Our proposed HGP-RL method greatly outperforms other approaches when compared to processing times and CPU use. HGP-RL uses 43.62\% of the CPU, which is significantly less than other methods, which use close to or more than 100\% of the CPU. Specifically, HGP-RL uses 45.05 \% less CPU than TRN, making it around 2.23 times more efficient. The HGP-RL shows a reduction of 64.67\% compared to DGORL's CPU consumption of 123.56\%, making it nearly three times as effective. 
%TODO We also looked at the efficiency of the HGP-RL without hierarchy by using the Sparse and Dense resolution GPR models. 
%The hierarchy version of HGP-RL, however, uses 23.18\% more CPU power than the Sparse version of the same algorithm without Hierarchy. However, it outperforms the Dense version of HGP-RL without Hierarchy in terms of efficiency, requiring 51.13\% less CPU and being nearly 2.05 times as efficient.

In terms of processing time, HGP-RL took around 33ms (a real-time performance at 30Hz frequency) and is more time-effective by being 23.59\% faster than TRN's processing time of 43.65 ms. HGP-RL is nearly twice as efficient as DGORL, which processes in 64.34 ms. 
%In contrast, the Sparse version of HGP-RL without Hierarchy exhibits a 55.69\% increase in processing time when compared to the HGP-RL with Hierarchy, showing that the Sparse version is more effective in this regard. However, the processing time efficiency of the HGP-RL with Hierarchy is 48.91\% greater than that of the Dense version of the HGP-RL without Hierarchy, or almost 1.96 times.
Furthermore, with HGP-RL, the robots share only two pieces of data: the AP location estimate and the current odometry position. This is significantly lower compared to the relative range of information shared in the other approaches. As we can see, the COM payload of HGP-RL is up to 70\% lower than the other approaches. 
As observed in Fig.~\ref{fig:loc_performance_plot}, the reduced CPU utilization, COM payload, and processing times for HGP-RL prove its efficiency for implementation in low-power computing devices and resource-constrained robots.

\subsubsection{Scalability}
We also have performed experiments to validate the scalability and robustness of the proposed HGP-RL approach. We have varied numbers of robots from 3 to 10 and conducted experiments in increasing sizes of simulation environments to allow sufficient space for robots to navigate. Table~\ref{tab:scalability_perforamnce} demonstrates the scalability of the HGP-RL system under constant resolution and hierarchy-level settings as other experiments to maintain high accuracy. The localization error (RMSE) modestly rises from 0.073 m to 0.124 m for 3 to 10 robots, indicating a slight decrease in accuracy with more complex scenarios involving more robots and larger dimensions. Processing time exhibits a more pronounced increase, growing from 33 milliseconds for the smallest dimension with three robots to 121 milliseconds for the largest dimension with ten robots (which takes more than 920 ms for a full-resolution GPR instead of HGP). This trend underscores the system's handling of increased complexity through both spatial expansion and a larger robot cohort, balancing accuracy against computational demands.

\begin{table}[h!]
%\vspace{-2mm}
\caption{Scalability results: HGP-RL performance for varying dimension and number of robots}
\label{tab:scalability_perforamnce}
%\vspace{-2mm}
\centering
\begin{tabular}{lccc}
\hline
Dimension & No. of Robots & RMSE (m) & Proc. Time (msec) \\ \hline
$3\times2$ & 3 & 0.073 & 33 \\
$4\times3$ & 6 & 0.092 & 59 \\
$5\times4$ & 8 & 0.103 & 97 \\
$6\times5$ & 10 & 0.124 & 121 \\ \hline
\end{tabular}
%\vspace{-2mm}
\end{table}

\subsubsection{Impact of the RSSI noise levels}
As we rely on the RSSI measurements to build the GPR map, the noise level in the RSSI measurements can affect the accuracy of the localization. 
To analyze this effect, we simulated different noise levels in the measured RSSI values. 
The overlay plot in Fig.~\ref{fig:loc_performance_plot} shows that the simulated RSSI at noise level 4 dBm represents real-world RSSI observations. 
Although DGORL performed better than TRN as observed in \cite{latif2022dgorl}, HGP-RL has demonstrated 2x higher accuracy than both approaches. Furthermore, the results have shown that the HGP-RL has lower RMSE (high accuracy) among all techniques, even under high noise levels. But, the improvement in accuracy is less pronounced with increasing noise levels.

%\subsection{Discussion}

%TODO Scalability explain number of robots - distributed and coomplexity of GP >> RL
%TODO - limitation wrt time sync or delayed data?

%Add \cite{miyagusuku2018data} for multiple access points (roaming scenario)
%Add \cite{dong2022mr} GMM for complex RSSI signal distributions
%Add \cite{miyagusuku2018data} GPR with path loss models known - known environment

\begin{figure}[t]
\includegraphics[width=0.96\linewidth]{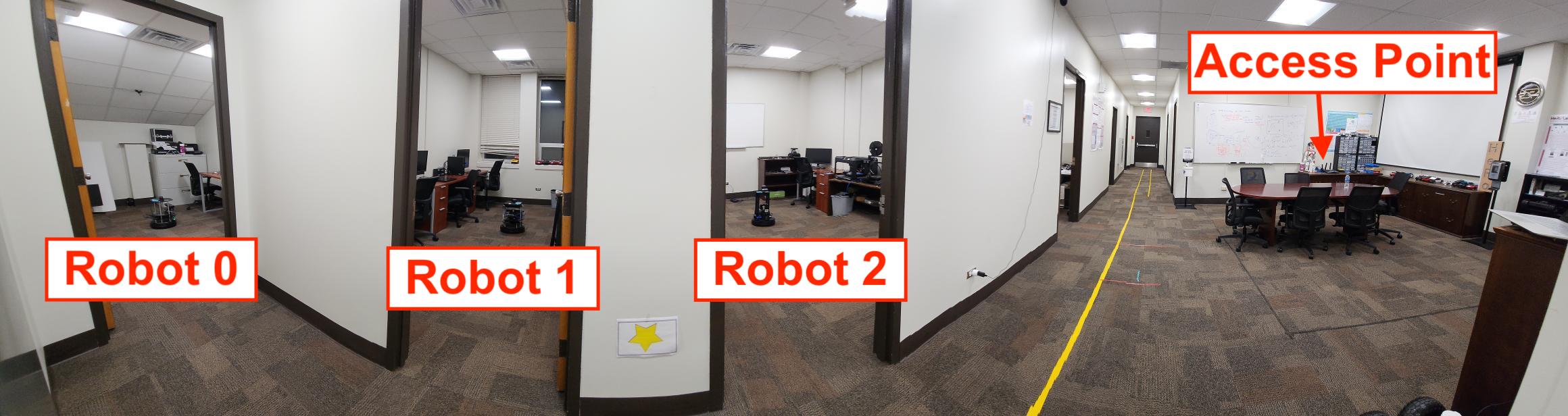}  
\vspace*{2mm}
\includegraphics[width=0.59\linewidth]{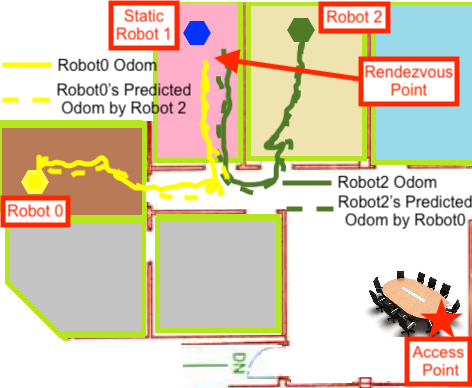}
\includegraphics[width=0.36\linewidth]{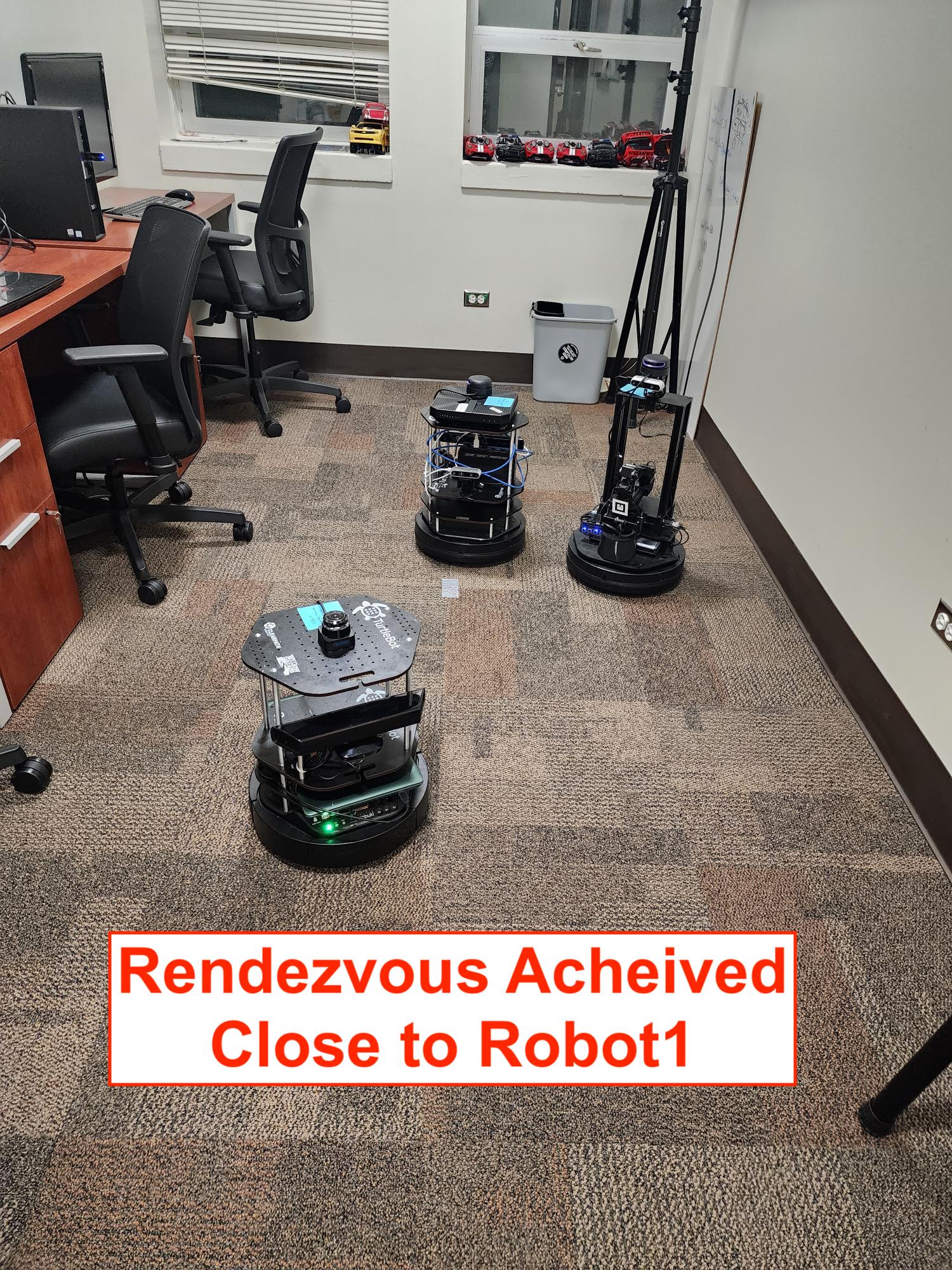}
%\end{subfigure}
%\vspace{-2mm}
\caption{A trial from the multi-robot rendezvous experiment: Initial (top) and final (bottom) state of the robots and their trajectories using HGP-RL.}
    \label{fig:rendezvous}
    %\vspace{-4mm}
\end{figure}

%\iffalse % test

\section{Real-world Experiments}
\label{sec:realworld}
To validate the practicality and generalizability, we implemented the HGP-RL approach in ROS-Neotic framework to perform a multi-robot rendezvous task using three Tutlebot2e robots in a large 10m x 13m multiroom lab environment. All robots are configured as fully connected to the same Wi-Fi AP. The robots are initially located in different rooms (without visibility to each other and the AP), and the RSSI map has non-line-of-sight conditions.
%To perform the rendezvous, robots then use the relative localization method discussed in Section~\ref{sec:relative_localization} of other robots. 
% Robots operate in a distributed way to find the direction to pursue using perceived relative locations of other robots in their local frame of reference using  
% \begin{equation}
%     _i \dot{\mathbf{p}}^i = \frac{1}{N} \sum_{j\in \mathcal{N}_i} (_i \mathbf{p}^j - {}_i \mathbf{p}^i) .
%     \label{eqn:rendezvous} 
% \end{equation}
% Here, $\mathcal{N}_i$ is the set of neighbor robots of the robot $i$. 
% The velocity (direction) obtained from Eq.~\eqref{eqn:rendezvous} is then given to the robot's mapping and autonomous motion planner for low-level path planning and obstacle avoidance (we used ROS gmapping + move\_base packages for this implementation). 
We performed five trials and successfully achieved a rendezvous of all robots within a small threshold distance between the robots. 

A sample of the initial and rendezvous positions and their trajectories can be seen in Fig.~\ref{fig:rendezvous}. The robots could locate the AP within $0.57 \pm 0.12$ m accuracy, and the relative trajectory error (between the odometry and predicted trajectory of other robots) was within $0.42 \pm 0.09m$ on average. The \textbf{attached experiment video} shows the performance of GRPL in real time.
%Note, we are able to perform relative localization in a large workspace bounded by the communication range, which is much smaller than a sensor range. 
The experiment validated the practicality of the approach in handling real-world scenarios with noisy RSSI values and occluded non-line-of-sight conditions. It also demonstrates the applicability of HGP-RL to most multi-robot operations, such as exploration and formation control.
Together, the experimental evidence suggests that HGP-RL is a scalable, practical, and reliable approach to perform multi-robot relative localization. HGP-RL can be readily applied to scenarios where no visibility between the robots or robots' trajectories never overlap (i.e., in scenarios where loop closure is not possible but is a requirement for map-based or feature-based localization approaches).

\section{Conclusion}
We proposed a Hierarchical Gaussian Processes-based Relative Localization (HGP-RL) approach for multi-robot systems. HGP-RL combines hierarchical inferencing over the RSSI map with a novel AP-oriented relative localization using the ubiquitous RSSI data from a single Wi-Fi AP. Our method addresses the limitations of existing solutions by offering high accuracy while reducing computational efficiency to enable accurate and efficient relative localization. Experiment results demonstrated that the HGP-RL approach outperforms state-of-the-art methods such as GPR-variants, TRN, and DGORL regarding accuracy, computational and communication efficiency. Moreover, the proposed method consistently performs across different experiments, making it a reliable and practical choice for localization tasks. 
%Future work will focus on overcoming some of the limitations by integrating reliable signal processing techniques, Angle of Arrival methods, and synergistic information fusion to further enhance the performance of the proposed HGP-RL.

\bibliographystyle{IEEEtran}
\bibliography{ref}

\end{document}